\pgfplotsset{compat=1.18}  
\theoremstyle{definition}
\newtheorem{definition}{Definition}
\theoremstyle{remark}
\newtheorem*{remark}{Remark}
\theoremstyle{plain}
\newtheorem{proposition}{Proposition}
\newcommand\independent{\protect\mathpalette{\protect\independenT}{\perp}}
\def\independenT#1#2{\mathrel{\rlap{$#1#2$}\mkern2mu{#1#2}}}
\title{Balancing Profit and Fairness in Risk-Based Pricing Markets}
\author{
    Jesse Thibodeau\textsuperscript{\rm 1},
    Hadi Nekoei\textsuperscript{\rm 1,2},
    Afaf Taïk\textsuperscript{\rm 1,2},
    Janarthanan Rajendran\textsuperscript{\rm 3},
    Golnoosh Farnadi\textsuperscript{\rm 1,4}
}
\begin{document}

\maketitle

\begin{abstract}
Dynamic, risk-based pricing can systematically exclude vulnerable consumer groups from essential resources such as health insurance and consumer credit.  We show that a regulator can realign private incentives with social objectives through a learned, interpretable tax schedule.  First, we provide a formal proposition that bounding each firm’s \emph{local} demographic gap implicitly bounds the \emph{global} opt-out disparity, motivating firm-level penalties.  Building on this insight we introduce \texttt{MarketSim}—an open-source, scalable simulator of heterogeneous consumers and profit-maximizing firms—and train a reinforcement learning (RL) social planner (SP) that selects a bracketed fairness-tax while remaining close to a simple linear prior via an \(\ell_1\) regularizer.  The learned policy is thus both transparent and easily interpretable. In two empirically calibrated markets, i.e., U.S. health-insurance and consumer-credit, our planner simultaneously raises demand-fairness by up to 16\% relative to unregulated Free Market while outperforming a fixed linear schedule in terms of social welfare without explicit coordination. These results illustrate how AI-assisted regulation can convert a competitive social dilemma into a win–win equilibrium, providing a principled and practical framework for fairness-aware market oversight.
\end{abstract}

\section{Introduction}
\label{sec:intro}

Firms equipped with modern computational power and extensive consumer data logs may reap financial gains by adopting dynamic (or personalized) pricing, which tailors prices to potential customers or customer segments based on their estimated willingness-to-pay. This approach enables firms to extract the greatest economic value from consumer data. From an efficiency perspective, dynamic pricing has been shown to boost firm profitability and accelerate sales speeds~\citep{schlosser2018dynamic, wang2023algorithms}. However, its welfare implications are less consistent. In some markets, including insurance and lending, dynamic pricing can yield undesirable distributional outcomes~\citep{zhu2023neoliberalization, betancourt2022dynamic}. For instance, while health insurers often rely on dynamic pricing, recent census data indicate that members of the Hispanic population in the U.S. are, on average, roughly twice as unlikely to have healthcare coverage as members of the Afro-descendent population, who in turn are twice as unlikely as members of the Caucasian and Asian populations~\citep{quickstats, keisler2024health}. Similarly, data reveal a negative correlation between likelihood of coverage and income, and since income and ethnicity are themselves correlated, there are justifiable concerns that healthcare coverage may be systemically biased.

\begin{figure}[ht!]
\centering
\begin{tikzpicture}
\begin{groupplot}[
    group style={
        group size=2 by 1,
        horizontal sep=0.2cm,
    },
    ybar,
    ymin=0, ymax=39,
    width=0.3\textwidth,
    height=4.5cm,
    enlarge x limits=0.2,
    tick style={draw=none},      
    yticklabels={},               
    every node near coord/.append style={font=\scriptsize,anchor=south},
    ticklabel style={font=\scriptsize},
]

\nextgroupplot[
    title={\scriptsize \textbf{Race \& Hispanic Origin}},
    ylabel={\scriptsize \% Uninsured},
    symbolic x coords={White,Black,Asian,Hispanic (any race)},
    xtick=data,
    xticklabel style={rotate=25,anchor=east},
    nodes near coords,
]
\addplot+[fill=blue!60] coordinates {
    (White,7.0)
    (Black,11.1)
    (Asian,6.8)
    (Hispanic (any race),23.6)
};

\nextgroupplot[
    title={\scriptsize \textbf{Income}},
    symbolic x coords={Lowest quintile,Second quintile,Third quintile,Fourth quintile,Highest quintile},
    xtick=data,
    xticklabel style={rotate=25,anchor=east},
    nodes near coords,
]
\addplot+[fill=blue!60] coordinates {
    (Lowest quintile,21.2)
    (Second quintile,18.8)
    (Third quintile,13.1)
    (Fourth quintile,7.4)
    (Highest quintile,3.8)
};

\end{groupplot}
\end{tikzpicture}
\caption{Percentage of working-age adults without health insurance in 2023, by race and income.}
\label{fig:insurance_sidebyside}
\end{figure}
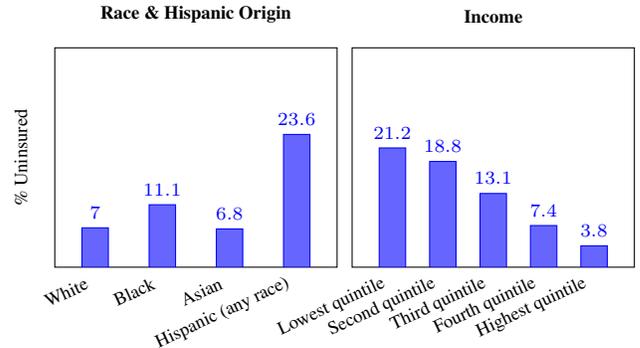

Beyond this, substantial price discrepancies between consumers may also give rise to perceptions of unfairness~\citep{lee2011perceived}, potentially discouraging market participation and perpetuating existing disparities. Moreover, in markets where personal assets can be leveraged to negotiate more favourable terms, goods may become \emph{relatively} more affordable for higher-income consumers. In such scenarios, scarce goods tend to be allocated to privileged groups, leaving fewer units—or lower-quality alternatives—to those with fewer resources. This pattern is well documented in the lending market, where it amplifies wealth gaps (for example, through restricted access to home equity) and drives gentrification. Similar issues arise in sectors such as education services and public transportation, where buyer distributions should ideally reflect those of the underlying population.
\\
\newline
Dynamic pricing typically enables firms to set prices for different consumer segments in order to maximize the expected profits derived from each. Thus, a firm adopting dynamic pricing rarely accounts for its resulting buyer distribution, which can be considered unfair if it diverges too sharply from that of the broader population. In this study, we explore the challenge of dynamic pricing, and specifically that of~\emph{regulating its use under demand fairness criteria} in markets where buyer distributions should mirror those of the underlying population. Motivating this, we first demonstrate how profit-maximizing price allocations fail to satisfy demand fairness under purely competitive or collusive dynamics. Broadly, we recognize that it is unrealistic to expect a profit-maximizing firm to voluntarily consider fairness notions in its pricing strategy. Therefore, we consider how a benevolent social planner (SP) might use policy tools such as taxation to encourage market participation among underrepresented consumer groups and penalize unfair firm behavior. To achieve this, we use reinforcement learning to train an SP capable of devising financial incentives that promote demand fairness—namely, by shrinking market opt-out disparities between population subgroups. To conduct our experiments, we introduce~\texttt{MarketSim}, a simple yet robust and scalable simulation framework wherein arbitrarily many firms engage in price Free Market to capture a market of arbitrarily many consumer profiles with heterogeneous utility. \textbf{Our findings indicate that social welfare can be enhanced by taxing firms in ways that incentivize fairer market-specific conduct}. Further, by endowing the learning agent with a domain-specific prior, we observe that interpretability can be maintained at the policy level, confirming the effectiveness of our approach at solving specific markets while retaining certain desirable properties such as tax monotonicity. Contributions of our work include:
\begin{itemize}
    \item A formal demonstration of how local, firm‑level incentives can satisfy a global fairness criterion and thereby raise social welfare in a setting of multiple self‑interested stakeholders with incomplete information.
    \item The introduction of \texttt{MarketSim}, a robust and easy-to-use open-source simulator for experimenting with various market dynamics and regulatory policies imposed on arbitrarily many heterogeneous firms and consumers, and evaluating their welfare implications.\footnote{Code will be made publicly available upon acceptance of this work.}
    \item An application of reinforcement learning to generate optimal regulatory policies in two different instances of \texttt{MarketSim} (replicating the markets for insurance and consumer credit), showing that we can incentivize welfare-improving firm behaviour in each market, while retaining policy interpretability.
\end{itemize}

\section{Related Work}
\label{sec:related_work}
The interdisciplinary nature of this work requires a review of topics from economics, specifically in the subfields of welfare economics and consumer choice theory, as well as a broad overview of applications of artificial intelligence (AI) to welfare economics.

\subsection{Economics Foundations}
In this work, we explore consumer choice and pricing dynamics in competitive markets with heterogeneous agents. On the demand side, consumers exhibit varying sensitivities to price fluctuations, while on the supply side, firms face heterogeneous marginal costs that proxy for technological and scale advantages. Our demand system borrows the random-utility framework of \citet{berry1993automobile}, yet departs from it in two key respects: first, we replace perfectly rational choice with a stochastic rule that captures bounded rationality and behavioural noise; second, we model multiple competing firms rather than a representative producer, thereby enabling firm-level strategic interaction. These extensions link our analysis to early work on preference heterogeneity by \citet{becker1962irrational} and its discrete-choice formalisation by \citet{mcfadden1972conditional}, whose insights remain central to modern consumer-choice theory \citep{ben2002hybrid}.  Finally, following the process-based welfare view of \citet{fleurbaey2008fairness}, we evaluate outcomes not only by efficiency but also by the fairness of the mechanisms that generate them, an angle largely absent from the original random-utility literature.

\subsection{Fairness in Dynamic Pricing}
The welfare-theoretic study of price design has migrated from economics to operations research \citep{gallego2019revenue} and computer science \citep{das2022individual}, giving rise to a rich taxonomy of fairness definitions.~\citet{cohen2022price} prove that price, demand, consumer-surplus, and no-purchase fairness cannot be simultaneously satisfied in dynamic settings; we therefore adopt \emph{demand fairness} \citep{cohen2022price,kallus2021fairness}, which directly measures disparate impact on group participation and is well motivated in education, consumer credit and healthcare domains.  Alternative notions such as proportional fairness \citep{bertsimas2011price} highlight welfare trade-offs but do not readily extend to sequential, multi-firm games.  RL approaches such as \citet{maestre2019reinforcement} impose fairness via Jain’s index under monopoly; in contrast, our regulator shapes \emph{competitive} firms’ incentives so that they voluntarily choose fairer prices, thereby filling the gap between single-seller RL treatments and static constrained-optimisation models.

\subsection{AI for Economic Policy Generation}
The closest application to our work combining economic simulations and sequential modelling is the AI Economist \citep{zheng2020ai}, where agents interact in a simulated gather-build-trade society, while a social planner aims to learn an income taxation strategy that improves social welfare, defined as the product of equality and economic productivity. While their work is effective at showcasing emergent behaviours among simulated consumer-workers under incumbent tax regimes, we introduce a new layer to our exploration which instead focuses on how a dynamic regulator can impact societal outcomes by aligning the objectives of self-interested firms with its own. Further, our work focuses on markets involving dynamic pricing, where firms assign prices based on consumer group membership. This added complexity allows for a deeper analysis of firm responses to incumbent policy frameworks. In addition, we refer to the safe RL literature to impose domain-specific policy constraints in order to maintain a degree of interpretability, which is critical for the widespread adoption of AI-generated public policy. In fact, safe RL methods routinely incorporate domain constraints to ensure an agent’s policy remains feasible or respects regulatory standards \citep{garcia2015comprehensive}. These constraints can be implicit (e.g., penalizing the agent for violating constraints)~\citep{achiam2017constrained}. By tethering our AI regulator's reward function to a baseline monotonic schedule, we encourage the learned regulatory policy to align with essential policy norms, namely, that higher fairness should generally not be penalized by higher tax rates, while preserving the safety and interpretability required in real-world economic regulation.

\section{Preliminaries}\label{sec:preliminaries}
We provide an overview of notation and definitions referred to throughout the remainder of this work. Among these, we refer to local and global notions of fairness within this context. Further, we motivate our policy mechanism design by demonstrating how local fairness incentives have global fairness implications. Going forward, let \( A \) be a random variable representing a consumer profile, taking values in the set \( \mathcal{A} = \{1, \dots, m\} \), and let $F$ be a random variable denoting the firm selected by a given consumer, taking values in \( \mathcal{F} = \{0, \dots, n\} \), with \(F=0\) referring to opting out of the market.

\subsection{Definitions and Fairness Metrics}\label{subsec:fair_metrics}
\begin{definition}[\(\epsilon\)-\textbf{Local Fairness}]\label{def:local-fairness}

For any firm \( j \in \mathcal{F} \), and for any consumer profile pair \(i,k \in A\), we say that firm \(j\) is \(\epsilon\)\emph{-locally fair} if
\[
\max_{i,k}\,\bigl|\Pr(F=j\;|\;A=i)-\Pr(F=j\;|\;A=k)\bigr|\;\le\;\epsilon.
\]
\end{definition}

\begin{remark}
When \(\epsilon=0\), it means that every consumer group's consumption choice is conditionally independent from their group membership.
\end{remark}
\noindent We quantify market-wide fairness via the opt-out disparity between consumer groups, which we call~\emph{global fairness} in order to relate it to its~\emph{local} counterpart. This definition draws on~\emph{demand fairness}, proposed by~\citet{cohen2021dynamic}.
\begin{definition}[$\epsilon$–\textbf{Global Fairness}]\label{def:global}
The entire market is \emph{$\epsilon$–globally fair} if the \emph{opt‑out
rate} is (approximately) profile–independent:
\[
\max_{i,k}\Bigl|
    \Pr(F=0\;|\;A=i)-\Pr(F=0\;|\;A=k)
\Bigr|\;\le\;\epsilon.
\]
\end{definition}

\begin{remark}
When $\epsilon=0$, every profile opts out with exactly the same probability,
i.e.\ $Pr(F=0)$ is constant in $A$. This result is akin to demographic parity~\cite{dwork2012fairness}, where profile \(i \independent\) firm \(j\).
\end{remark}

\subsection{Fairness alignment}\label{subsec:alignment}

Local and global fairness capture different levels of discrimination, though both measurements have shortcomings if considered on their own. On one hand, perfect local fairness fails to capture consumer counts, that is, a firm's consumers can mirror the population while including very few in total. Similarly, global fairness is satisfied under no market participation, i.e. the case where everyone opts out. Thus, any policy objective involving global fairness should also include some notion of economic productivity. Further, global fairness does not imply local fairness and is not directly addressable. Figure~\ref{fig:misalign} illustrates a market in which \emph{global} fairness
holds—both consumer profiles opt out $\sim$ 20\,\% of the time—yet the two firms serve very
different mixes of profiles ($F_1$ serving $A_1$ and $F_2$ serving $A_2$), violating \emph{local} fairness.
\begin{figure}[!h]
  \centering
  \begin{tikzpicture}[scale=0.8]
    \draw[thick] (0,1.4) rectangle (5,1.9);
    \draw[thick] (1,1.4) -- (1,1.9);
    \draw[thick] (4,1.4) -- (4,1.9);
    \fill[pattern=north west lines] (4,1.4) rectangle (5,1.9);
    \node at (0.5,1.65) {\scriptsize $F_2$};
    \node at (2.5,1.65) {\scriptsize $F_1$};
    \node at (5.3,1.65) {\scriptsize $A_1$};

    \draw[thick] (0,0.3) rectangle (5,0.8);
    \draw[thick] (3,0.3) -- (3,0.8);
    \draw[thick] (4,0.3) -- (4,0.8);
    \fill[pattern=north west lines] (4,0.3) rectangle (5,0.8);
    \node at (1.5,0.55) {\scriptsize $F_2$};
    \node at (3.5,0.55) {\scriptsize $F_1$};
    \node at (5.3,0.55) {\scriptsize $A_2$};
  \end{tikzpicture}
  \vspace*{.5em}
  \caption{Perfect global fairness does \emph{not} imply local fairness.}
  \label{fig:misalign}
\end{figure}
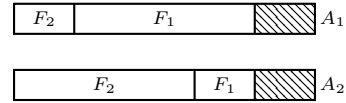
\noindent How then can an external regulator, unable to address global fairness explicitly, make global fairness improvements by deploying firm-level incentives? This challenge constitutes a mechanism design problem, which we formalize as an optimization problem in the following section.

\subsection{From Local to Global Fairness}
\label{subsec:local_to_global}
We first show that \emph{enforcing an $\epsilon$–local–fairness constraint on 
\emph{each} firm automatically bounds the market‑wide opt‑out disparity.}
The result motivates our policy design: penalties can be assessed at the firm 
level, yet they control the global metric of interest.





\begin{proposition}[Local $\!\Rightarrow\!$ Global Fairness Bound]
\label{thm:local_to_global}
If every firm $j\in\{1,\dots,n\}$ satisfies the local‑fairness condition  
\[
   \max_{i,k\in\mathcal A}
   \bigl|
      \Pr(F=j\mid A=i)-\Pr(F=j\mid A=k)
   \bigr|
   \;\le\;\epsilon,
\]
then the market is $\varepsilon'$‑globally fair, i.e.  
\[
   \bigl|
      \Pr(F=0\mid A=i)-\Pr(F=0\mid A=k)
   \bigr|
   \;\le\;\varepsilon'
   \;\forall\; i,k\in\mathcal A,
\]
with
\[
   \boxed{\;
      \varepsilon'\;=\;\min\{\,n\epsilon,\,1\,\}\;
   }.
\]
\end{proposition}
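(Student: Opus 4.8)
The plan is to exploit the single identity that ties the opt-out probability to the firm-choice probabilities: for every profile $a\in\mathcal A$, the law of total probability over the mutually exclusive and exhaustive events $\{F=0\},\{F=1\},\dots,\{F=n\}$ gives $\sum_{j=0}^{n}\Pr(F=j\mid A=a)=1$, hence $\Pr(F=0\mid A=a)=1-\sum_{j=1}^{n}\Pr(F=j\mid A=a)$. This converts a statement about the un-regulated quantity ($F=0$) into a statement about the quantities the regulator actually controls ($F=j$, $j\ge 1$).

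Next I would fix an arbitrary pair $i,k\in\mathcal A$ and subtract the two identities to get
\[
\Pr(F=0\mid A=i)-\Pr(F=0\mid A=k)
=\sum_{j=1}^{n}\bigl(\Pr(F=j\mid A=k)-\Pr(F=j\mid A=i)\bigr).
\]
Applying the triangle inequality to the right-hand side and then invoking the hypothesis term by term,
\[
\bigl|\Pr(F=0\mid A=i)-\Pr(F=0\mid A=k)\bigr|
\le\sum_{j=1}^{n}\bigl|\Pr(F=j\mid A=i)-\Pr(F=j\mid A=k)\bigr|
\le n\epsilon,
\]
which already yields the bound $\varepsilon'=n\epsilon$. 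To obtain the $\min$, I would note separately that the left-hand side is a difference of two numbers in $[0,1]$, hence bounded by $1$ unconditionally; combining the two bounds gives $\varepsilon'=\min\{n\epsilon,1\}$. Since $i,k$ were arbitrary, the global-fairness inequality holds for all pairs, and the maximum over pairs obeys the same bound.

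There is no real analytic obstacle here; the only thing to be careful about is bookkeeping—making sure the sum runs over $j=1,\dots,n$ (excluding the opt-out index $0$) so that the count multiplying $\epsilon$ is exactly $n$, and making explicit that the $\min$ with $1$ is needed precisely because $n\epsilon$ can exceed $1$ when $\epsilon$ is large, in which case the probabilistic bound is the binding one. I would also remark that the bound is tight in the worst case (e.g.\ when all firms shift mass in the same direction between the two profiles), which explains why no factor smaller than $n$ can be guaranteed in general.
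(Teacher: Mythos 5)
Your proof is correct and follows essentially the same route as the paper's: write $\Pr(F=0\mid A=a)=1-\sum_{j=1}^{n}\Pr(F=j\mid A=a)$, apply the triangle inequality to get the $n\epsilon$ bound, and combine with the trivial bound of $1$ to obtain the $\min$. The added remark on worst-case tightness is a nice touch but not needed.
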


\begin{proof}
For brevity write 
\(p_{j\mid i}:=\Pr(F=j\mid A=i)\).
The hypothesis gives
\(
|p_{j\mid i}-p_{j\mid k}|\le\epsilon
\)
for every firm $j\ge1$ and every pair of profiles $i,k$.

\medskip\noindent
\textbf{From local to opt‑out gap.}
Because the opt‑out probability is the complement of the in‑market mass,
\[
  p_{0\mid i}
  \;=\;
  1-\!\sum_{j=1}^{n}p_{j\mid i},
  \qquad
  p_{0\mid k}
  \;=\;
  1-\!\sum_{j=1}^{n}p_{j\mid k}.
\]
Hence
\[
\begin{aligned}
|p_{0\mid i}-p_{0\mid k}|
  &=\Bigl| \sum_{j=1}^{n}(p_{j\mid k}-p_{j\mid i}) \Bigr|
   \;\le\;\sum_{j=1}^{n}|p_{j\mid k}-p_{j\mid i}|
   \;\le\; n\epsilon.
\end{aligned}
\]

\medskip\noindent
\textbf{Probabilistic range.}
Because each $p_{0\mid\cdot}$ is a probability, their difference cannot exceed~1:
\(
|p_{0\mid i}-p_{0\mid k}|\le 1.
\)

\medskip\noindent
Combining the two bounds,
\[
   |p_{0\mid i}-p_{0\mid k}|
   \;\le\;\min\{\,n\epsilon,\,1\,\},
\]
which yields the stated $\varepsilon'$.
\end{proof}

\paragraph{Policy insight.}
Because global fairness follows directly from firm‑level constraints,
a regulator can simply penalize firms based on their own $\epsilon$–local gap; no market‑wide coordination term is needed. This concludes the motivation for our social planner's policy mechanism. In the following section, we outline the market environments in which policy explorations take place.

\section{Market Environment}
\label{sec:meth}
We model an oligopolistic market with heterogeneous consumers and profit‐maximizing firms.
We then introduce a social planner whose policy consists of a bracketed tax schedule to incentivize fairness. An illustration of this market can be found in Figure~\ref{fig:market_diagram}.
We proceed with formal definitions for our simulated agents.

\subsection{Consumers}
\label{sec:consumer_agents}
Each consumer profile \(i\) obtains utility
\[
U_{i,j} \;=\; \overline{\alpha} \;-\; \beta_i \,p_{i,j}
\]
from consuming firm \(j\)'s product, where \(\overline{\alpha}_j\) 
is base product utility, \(\beta_i\) is the price sensitivity of profile \(i\), 
and \(p_{i,j}\) is the per‐profile price. For the outside option \(F=j=0\), let 
\(U_{i,0} = \overline{\alpha}_0\). A consumer of profile \(i\) chooses among 
\(\{0,\dots,n\}\) with probability
\[
\mathbf{p}_{j\mid i}
\;=\;
\frac{\exp\bigl(U_{i,j}\bigr)}
     {\displaystyle\sum_{j=0}^{n} \exp\bigl(U_{i,j}\bigr)}.
\]

\subsection{Firms}
\label{sec:firm_agents}
Each firm \(j \in \{1,\dots,n\}\) may compute its expected profit:
\[
\mathbb{E}[\Pi_j]
\;=\;
\sum_{i=1}^{m} \mathbf{p}_{j\mid i}\,\bigl(p_{i,j} - mc_{i,j}\bigr),
\]
where \(mc_{i,j}\) is the average marginal cost for profile \(i\). Under free-market dynamics, firm \(j\)'s problem is
\[
\max_{\{p_{i,j}\}}
\;\mathbb{E}[\Pi_{j}]
\quad
\text{subject to } 0 \,\le\, p_{i,j} \,\le\, p_{\max}.
\]
Due to the inherent jump discontinuities in consumer choices under Free Market, firms solve for prices using Powell's derivative-free method~\citep{powell1964efficient} in the~\texttt{SciPy} Python optimization library~\citep{virtanen2020scipy}. While we found this to achieve better stability, we note that alternative optimization methods may be used to solve the firms' problem.

\subsection{Social Planner and Welfare Maximization}
\label{sec:planner_agent}

We now introduce a social planner who aims to maximize overall \emph{social welfare}, a hybrid measure of fairness and firm profits. To maintain policy interpretability, we also penalize large deviations from 
a simple, \emph{naive} bracketed‐tax baseline. 

\paragraph{Bracketed Fairness Tax.}
To incentivize firms to adopt fairer outcomes, we partition the fairness range \([0,1]\) into 
\(B\) brackets, each of width \(1/B\). Suppose firm~\(j\) achieves fairness \(f_j\in[0,1]\) 
and hence belongs to bracket \(b_j\), defined by 
\[
  f_j \;\in\; 
  \Bigl[\tfrac{b_j-1}{B},\;\tfrac{b_j}{B}\Bigr).
\]
We index tax brackets by \(b\in\{1,\dots,B\}\) and associate to each bracket a rate 
\(\tau_b\in[0,1]\). In general, we collect these into the vector
\[
  \boldsymbol{\tau} = [\tau_1,\dots,\tau_B]
  \;\in\; [0,1]^B.
\]
A firm in bracket~\(b_j\) thus faces an effective per‐profile margin
\(\bigl(p_{i,j} - mc_{i,j}\bigr)\,\bigl(1 - \tau_{b_j}\bigr)\). 
Consequently, under regulation, firm~\(j\)'s profit-maximization problem becomes
\[
  \max_{\{p_{i,j}\}}
  \;\mathbb{E}[\Pi_{j}]\,\bigl(1 - \tau_{b_j}\bigr),
  \quad
  p_{\min} \,\le\, p_{i,j}\,\le\, p_{\max}.
\]

\paragraph{Social Welfare Objective.}
Let \(\mathcal{W}\bigl(\boldsymbol{\tau}\bigr)\) be the total social welfare, 
\[
  \mathcal{W}\bigl(\boldsymbol{\tau}\bigr)
  \;=\;
  \Bigl(
    \frac{1}{n} 
    \sum_{j=1}^{n} 
    \mathbb{E}[\Pi_j]\!\bigl(1 - \tau_{b_j}\bigr)
  \Bigr)
  \;\times\;
  \mathrm{fairness_{global}}\!\bigl(\boldsymbol{\tau}\bigr),
\]
capturing \emph{both} global fairness (measured via the gap presented in Definition~\ref{def:global}) \emph{and} net firm profits
(under the chosen \(\boldsymbol{\tau}\)). We note that this multiplicative welfare expression is one of many possible ways to combine fairness and profit, however the intuition here is that \(\mathrm{fairness} \in [0, 1]\) effectively scales profit. A similar formulation of welfare is used in~\cite{zheng2020ai}. The planner’s goal is to select  \(\boldsymbol{\tau}\) to solve \(\max_{\boldsymbol{\tau}} \mathcal{W}\bigl(\boldsymbol{\tau}\bigr)\). To this end, we use a soft actor-critic algorithm~\citep{softactorcritic} to train an RL agent whose reward is \(\mathcal{W}\bigl(\boldsymbol{\tau}\bigr)\).

\noindent In Algorithm~\ref{alg:price-game}, we outline the simultaneous Nash Free Market in which firms select prices to optimize for profit given a policy generated by the social planner.

\begin{algorithm}[!ht]
\caption{Multi-Agent Price-Setting Game}
\label{alg:price-game}
\begin{algorithmic}[1]

\renewcommand{\algorithmiccomment}[1]{\hfill $\triangleright$ #1}

\STATE {\bf Input:}
\STATE \quad Number of firms $n$ (indexed by $j$); consumer profiles $i=1,\dots,m$ with size $S_i$ and price sensitivity $\beta_i$;
\STATE \quad Base utility $\overline{\alpha}_j$ for each firm $j$; outside option utility $\overline{\alpha}_0$;
\STATE \quad Tax brackets $\boldsymbol{\tau} = [\tau_1, \dots, \tau_B]$ set by planner; marginal costs $mc_{i,j}$.

\STATE {\bf Initialize:}
\STATE \quad Each firm $j$ has prices $p_{i,j}$ (possibly random or previously set).

\vspace{0.5em}
\STATE {\bf Step 1: Social Planner Sets Tax Policy}
\FOR{$j \leftarrow 1 \text{ to } n$}
  \STATE Compute fairness $f_j$ for firm $j$
  \STATE Assign bracket $b_j \leftarrow \text{bracket index such that } f_j \in \left[\tfrac{b_j-1}{B}, \tfrac{b_j}{B}\right)$
  \STATE Set effective tax rate $\tau_{b_j}$ for firm $j$
\ENDFOR
\STATE \quad \COMMENT{Firm $j$'s margin becomes $(p_{i,j} - mc_{i,j})(1 - \tau_{b_j})$}

\vspace{0.5em}
\STATE {\bf Step 2: Firms Simultaneously Update Prices}
\FOR{$j \leftarrow 1 \text{ to } n$}
  \STATE $p_{i,j} \leftarrow \arg\max_{p_{i,j}} \sum_{i=1}^{m} \mathbf{p}_{j \mid i} \cdot (p_{i,j} - mc_{i,j})(1 - \tau_{b_j}) \cdot S_i$
  \STATE \COMMENT{Firm $j$ chooses prices to maximize expected profit}
\ENDFOR

\vspace{0.5em}
\STATE {\bf Step 3: Consumers Choose Firm or Outside Option}
\FOR{$i \leftarrow 1 \text{ to } m$}
  \FOR{$j \leftarrow 1 \text{ to } n$}
    \STATE $U_{i,j} \leftarrow \overline{\alpha}_j - \beta_i \, p_{i,j}$
  \ENDFOR
  \STATE $U_{i,0} \leftarrow \overline{\alpha}_0$
  \STATE $\mathbf{p}_{j \mid i} \leftarrow \frac{\exp(U_{i,j})}{\exp(U_{i,0}) + \sum_{k=1}^{n} \exp(U_{i,k})} \quad \forall j$
\ENDFOR

\vspace{0.5em}
\STATE {\bf Step 4: Outcome and Payoffs}
\FOR{$j \leftarrow 1 \text{ to } n$}
  \STATE $\text{Demand}_{i,j} \leftarrow S_i \cdot \mathbf{p}_{j \mid i} \quad \forall i$
  \STATE $\Pi_j \leftarrow \sum_{i=1}^{m} (p_{i,j} - mc_{i,j})(1 - \tau_{b_j}) \cdot \text{Demand}_{i,j}$
\ENDFOR

\vspace{0.5em}
\STATE {\bf Step 5: End of Game}
\STATE \quad Output final $\boldsymbol{\tau}$, prices $\{p_{i,j}\}$, demands, and profits $\{\Pi_j\}$.

\end{algorithmic}
\end{algorithm}

\begin{figure*}[!ht]
  \centering
  \includegraphics[scale=0.60]{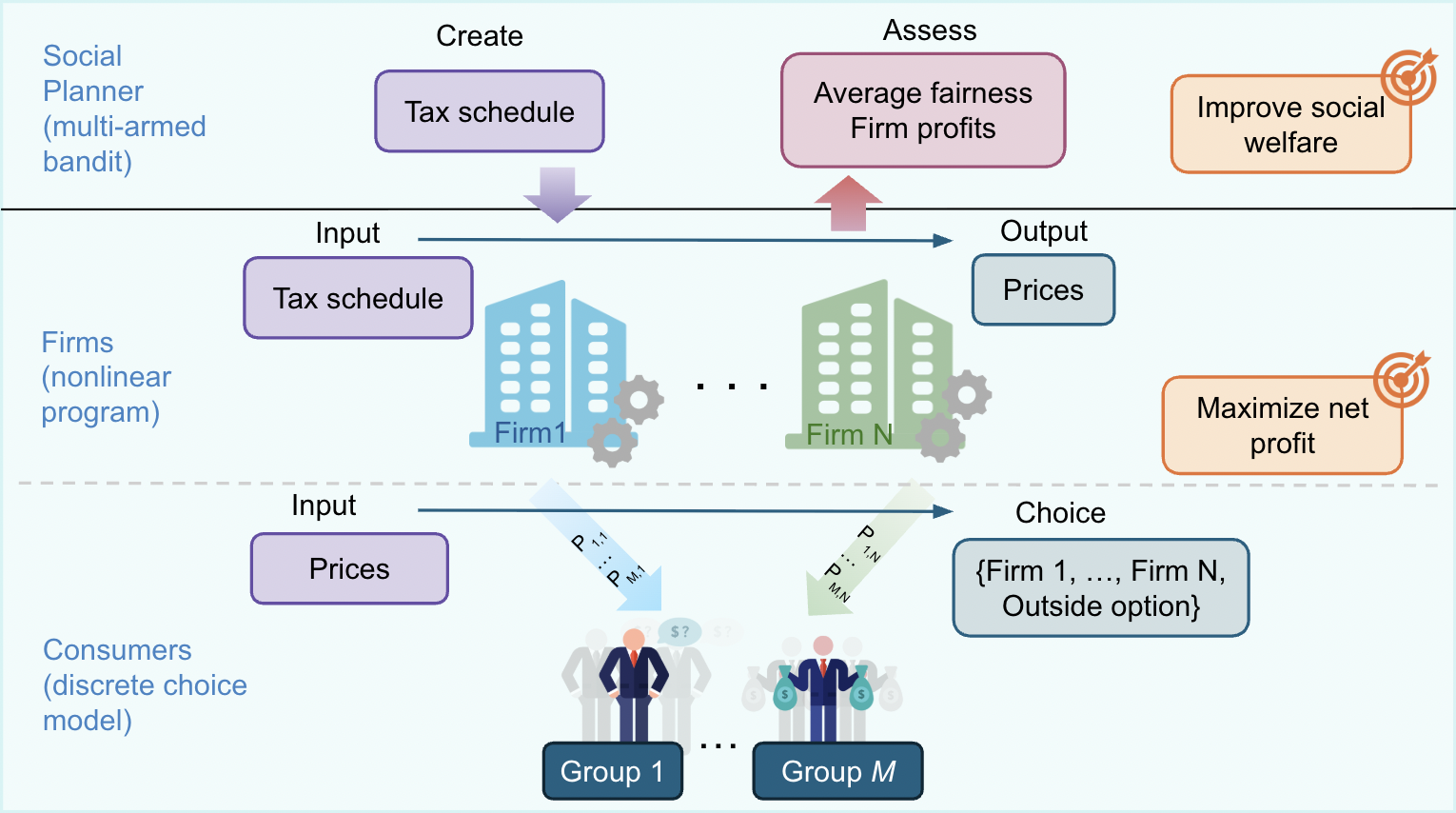}
  \vspace{5pt}
  \caption{A dynamic-pricing market consisting of 3 agent types, each with their own optimization objective. The social planner generates welfare-maximizing tax schedules applied to firms based on their local fairness gap. Firms then compute their best responses and assign consumer group-level prices. Finally, consumers make their selection from these prices.}
  \label{fig:market_diagram}
\end{figure*}

\section{Market Parameterization and Empirical Results}
\label{sec:dyn_incentives}

We consruct two market environments, \emph{health insurance} and \emph{consumer lending}, because both combine risk-based pricing with pronounced distributional concerns.  
Income‐group proportions follow \citet{pew2024middleclass}; insurance-coverage rates draw on \citet{keisler2024health}; and home-ownership patterns (a proxy for credit demand) follow \citet{census_homeownership_2020}.  
The population is divided into High, Middle, and Low income segments, each assigned a price elasticity~(\(\beta\)) and a firm‐specific marginal cost~(\(mc\)).
These parameters, which can be found for both markets in table~\ref{tab:market_init}, introduce system-wide heterogeneity.

\paragraph{Why heterogeneity matters.}
Heterogeneity arises among both agent participant types in the market and shapes every dimension of the policy problem:

\begin{itemize}
\item \textbf{Consumers.}  Differences in disposable income, outside options, and risk exposure create a spread of price elasticities.  In insurance, demand is relatively inelastic at higher incomes because coverage quality is valued more than marginal dollars.  In credit, by contrast, wealthier households can leverage their assets to negotiate better terms, pay cash or source cheaper capital altogether, making them more price-sensitive.  Such a cross-market reversal broadly illustrates how elasticity is a joint outcome of preference intensity and available substitutes.
\item \textbf{Firms.}  
Marginal cost is tightly linked to borrowers’ or policyholders’ income.
Low-income consumers are generally riskier to serve as they face higher job volatility, have thinner financial buffers~\citep{bertoletti2024consumer, fout2020credit}, and—in the case of health insurance—experience more occupational hazards and reduced access to preventive care~\citep{nicholson2008disparities}.
These factors translate into (i) higher expected claim costs for insurers and (ii) elevated default probabilities for lenders, raising the average per-unit cost of coverage or credit.  
By contrast, high-income consumers offer steadier cash flows, better health profiles, and superior collateral, enabling firms to price at lower cost.  
Even within a single industry, providers differ in their ability to manage this risk—large insurers leverage pooled data and predictive analytics, whereas smaller or niche firms often specialize in higher-risk pools—amplifying cost dispersion and strategic asymmetry.

\end{itemize}
From a regulatory standpoint, it is important to consider both sources of heterogeneity in order for policies to achieve realistic social welfare improvements. Our bracketed tax is therefore designed to be \emph{piecewise}—simple enough for transparency yet flexible enough to align marginal incentives across diverse firms. We compare three baselines:
\begin{itemize}
\setlength{\itemsep}{3pt}
\item \textbf{Free Market:} Firms compete in a simultaneous price-setting game with the aim of maximizing their individual profits in the absence of policy intervention.
\item \textbf{Linear regulation:} a monotonic, bracketed linear tax
\[
  \tau^\mathrm{base}_b 
  \;=\;
  1 \;-\; \frac{b}{B},
  \quad
  \text{for } b=1,\dots,B.
\]
      intended as a hand-crafted fairness correction. Intuitively, this baseline approximates a simple rule (\(\tau = (1-\!\textit{fairness})\)),
discretized into \(B\) brackets.

\item \textbf{Collusion:} Firms jointly maximize aggregate profit. This benchmark reveals the upper bound on total profitability under perfect coordination, and can be seen as an oracle case for profit.
\end{itemize}
For these benchmarks, we report outcomes after convergence to Nash equilibrium prices.

\begin{table}[ht]
\centering
{\scriptsize
\setlength{\tabcolsep}{3pt}
\renewcommand{\arraystretch}{1.05}
\begin{tabular}{@{}l c ccc !{\vrule width 0.8pt} cccccc@{}}
\toprule
        &      & \multicolumn{3}{c}{\textbf{Insurance}}
        & \multicolumn{6}{c}{\textbf{Credit}}\\
\cmidrule(lr){3-5}\cmidrule(lr){6-11}
\textbf{Group} & \textbf{N}
  & $\beta$ & $mc_1$ & $mc_2$
  & $\beta$ & $mc_1$ & $mc_2$ & $mc_3$ & $mc_4$ & $mc_5$ \\
\midrule
High (H)   & 200 & 0.25  & 2.50 & 2.25 & 3.00 & 0.40 & 0.65 & 0.45 & 0.60 & 0.44 \\
Middle (M) & 520 & 0.70  & 3.00 & 2.75 & 2.70 & 1.20 & 1.45 & 1.12 & 1.35 & 1.29 \\
Low (L)    & 280 & 0.825 & 3.50 & 3.25 & 2.25 & 2.05 & 2.30 & 2.25 & 2.28 & 2.10 \\
\bottomrule
\end{tabular}
\caption{Baseline demand elasticities and marginal costs. The insurance market is modelled with two competing insurers, and the credit market with five lenders. Price bounds are $P_{\min}=1$, $P_{\max}=20$.}
\label{tab:market_init}
}
\end{table}

\begin{table}[ht]
\centering
{\scriptsize
\begin{tabular}{@{}l c c c c@{}}
\rowcolor{gray!15}
\multicolumn{5}{c}{\textbf{Shared social-planner parameters}}\\
\toprule
Algorithm & Brackets $B$ & $\tau_{\min}$ & $\tau_{\max}$
          & $\lambda_{\text{ins}}/\lambda_{\text{cred}}$\\
\midrule
SAC & 20 & 0\% & 100\% & 100 \;/\; 10\\
\bottomrule
\end{tabular}
\caption{Social-planner initialization parameters.}
\label{tab:sac_init}
}
\end{table}

\subsubsection{Demand elasticity.}
Elasticity captures both willingness and ability to substitute.  For health insurance, the absence of close substitutes renders wealthier consumers less price-responsive.  In credit, abundant alternatives (e.g.\ home-equity lines, credit cards, or abstention) make the same group more price-elastic, whereas low-income borrowers confront a near take-it-or-leave-it contract.

\subsubsection{Marginal cost.}
Risk-adjusted cost falls with income in both markets but for distinct reasons: fewer costly medical claims in insurance, and lower default probabilities in lending. Firms, heterogeneous in ressources and capability, modulate these costs further.
\newline Finally, the social planner learns a piecewise-constant tax via Soft Actor–Critic (Table~\ref{tab:sac_init}).  By explicitly targeting cross-segment disparities, the learned policy reflects heterogeneity on \emph{both} the consumer and the firm side, in contrast to the naïve linear schedule.

\paragraph{\(\ell_1\) Penalty on Deviation from Baseline.}
To retain policy interpretability, we penalize the \emph{planner} 
for learning a tax schedule \(\boldsymbol{\tau}\) that deviates 
excessively from the naive baseline. Specifically, we add an \(\ell_1\) regularizer
\[
  \lambda \sum_{b=1}^B 
  \Bigl|\;\tau_b - \tau^\mathrm{base}_b\Bigr|,
\]
where \(\lambda \ge 0\) tunes how much the planner is incentivized to remain close
to \(\tau^\mathrm{base}\). Thus, the planner tweaks the naive schedule only to the
extent that it increases overall social welfare. This can be interpreted as an ``expert planner" capable of taking a known policy mechanism understood by domain experts and adapting it to specific markets. The social planner's full optimization problem is therefore
\[
  \max_{\boldsymbol{\tau}}
  \Biggl[
    \mathcal{W}\bigl(\boldsymbol{\tau}\bigr)
    \;-\;
    \lambda 
    \sum_{b=1}^B 
    \bigl|\tau_b - \tau_b^\mathrm{base}\bigr|
  \Biggr].
\]
Hence, while large deviations from the baseline \(\tau^\mathrm{base}\) are penalized, 
the planner retains the flexibility to adjust tax rates to increase overall social welfare 
(e.g., by nudging firms toward more equitable pricing where the naive schedule is suboptimal).

\subsection{Case Study 1: Health Insurance}
\subsubsection{Market Overview and Motivation.} Census data report 7.9\% of the U.S.\ working population uninsured ($\sim$25M). Though law-abiding insurance providers do not explicitly use sensitive attributes in the determination of premia, these can often be inferred from occupation and ZIP code~\citep{dwork2012fairness,barocas2016big}, making health insurance an ideal market for dynamic regulation. As mentioned, this instance of our simulation represents a market for a necessity good, and thus we endow low-risk profiles with low demand elasticity, and high-risk profiles with high demand elasticity.

\subsubsection{Policy Generation and Empirical Results.}
Use of our method to regulate the health insurance market yields substantial welfare gains over benchmarks. We measure \textbf{net profit} (after tax), \textbf{fairness}, and \textbf{social welfare} at both the per‐firm and global level. Table~\ref{tab:comparison_markets} (\textit{left}) shows the outcomes under four market regimes: (\textbf{Free Market}) purely competitive, (\textbf{Linear-SP}) a simplistic bracket policy with fixed cutoffs, (\textbf{RL-SP}) our proposed social planner, and (\textbf{Collusion}), a stable cartel‐like coordination (used solely for revealing an upper bound on profit from which we provide normalized profit values for the benchmarks of interest). From results, we note that the RL method achieved the highest social welfare, not only by improving fairness, but also in improving total market profitability, indicating that it was able to mitigate the social dilemma inherent to competitive games, instead forcing firms into a kind of policy-induced tacit cooperation that makes the market more stable and friendly to consumers. This is evidence that a fairness-seeking policy can outperform selfish Nash firms in terms of aggregate profits, suggesting that the RL policy might act as an implicit coordination device, getting firms closer to the cooperative outcome without explicit collusion. Broadly, our RL social planner improved social welfare in the market for health insurance compared to the baseline competitive case by approximately 11\%, and outperformed the linear policy by 10\%. After 2M training steps, the RL social planner converged to the policy found in Figure~\ref{fig:insurance_policy_generation}, with which, as seen in Table~\ref{tab:comparison_markets} (\textit{left}), it was able to funnel firms into welfare-improving fairness brackets. Further, providing the agent with an intuitive prior allowed it to maintain interpretability in brackets with few or no training examples. Specifically, it tweaked the Linear-SP baseline in fairness areas where it was able to achieve substantial welfare gains. 
\begin{figure}[!ht]
\centering
\begin{subfigure}{0.48\textwidth}
    \centering
    \includegraphics[width=\textwidth]{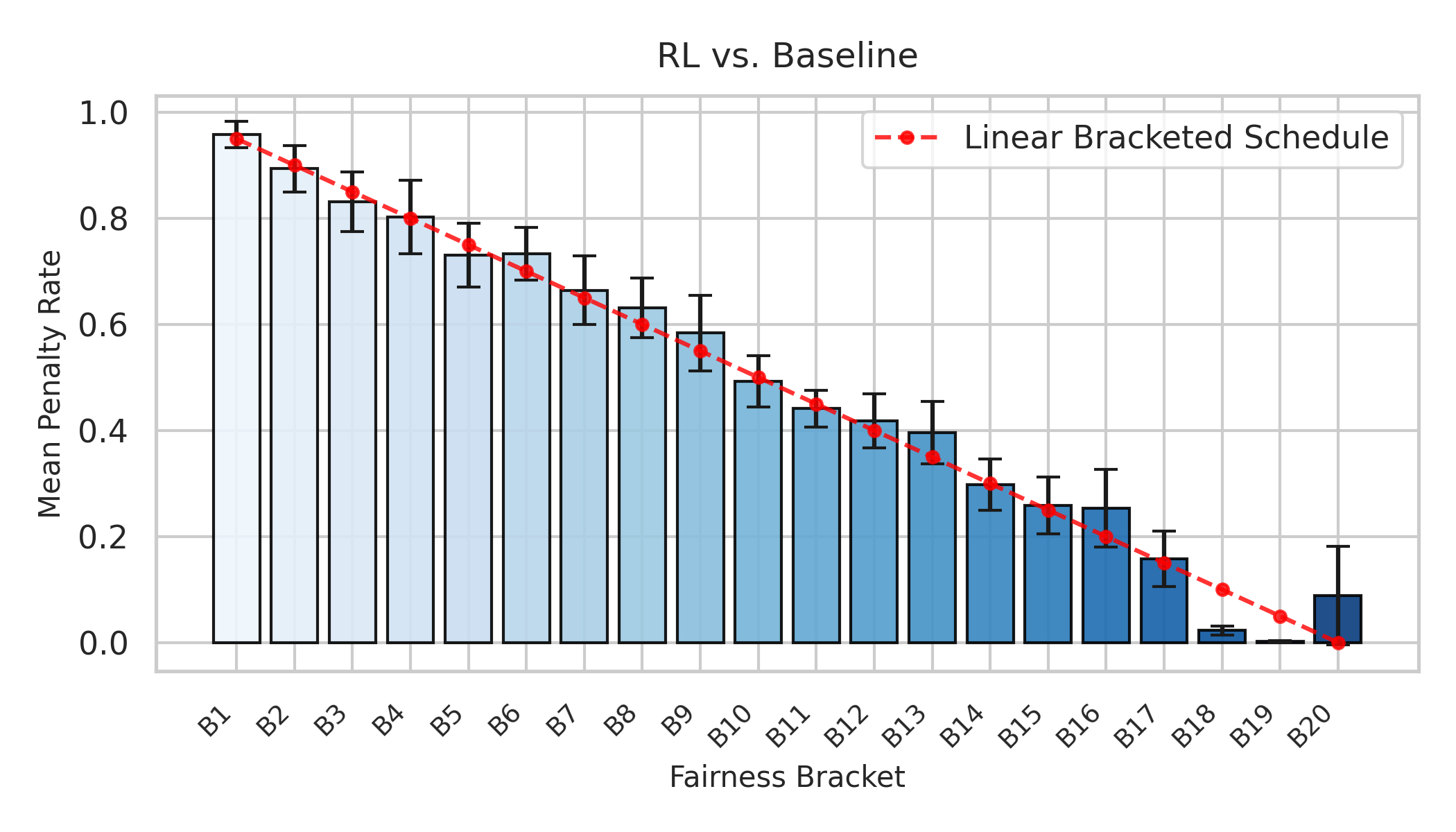}
    \vspace{-0.75cm}
    \caption{Taxation policy generated by the RL social planner for the insurance market.}
    \label{fig:insurance_policy_generation}
\end{subfigure}
\begin{subfigure}{0.48\textwidth}
    \centering
    \includegraphics[width=\textwidth]{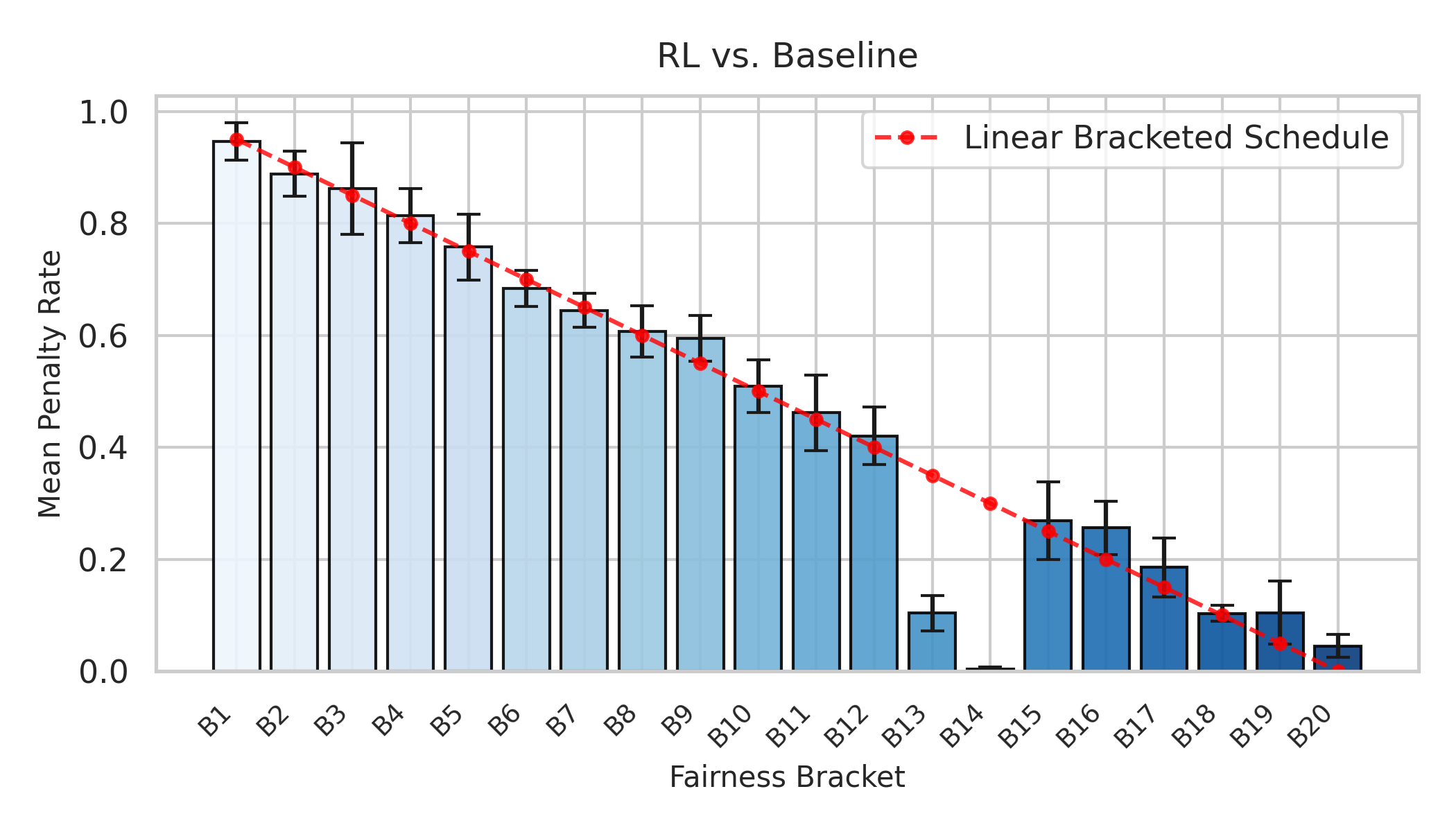}
    \vspace{-0.75cm}
    \caption{Taxation policy generated by the RL social planner for the credit market.}
    \label{fig:credit_policy_generation}
\end{subfigure}
\vspace{0.5cm}
\caption{Comparison of policy generation across two different markets:
Insurance (a) and Credit (b).}
\label{fig:policy_generation_both}
\end{figure}

\setlength\dashlinegap{1pt}   
\setlength\dashlinedash{2pt}  

\begin{table*}[!ht]
\centering
\small  
\setlength{\tabcolsep}{5pt}  
\begin{tabular}{l ccc:c | ccc:c}
\toprule
& \multicolumn{4}{c}{\textbf{Insurance}} 
& \multicolumn{4}{c}{\textbf{Credit}} \\
\cmidrule(lr){2-5}
\cmidrule(lr){6-9}
& \textbf{Free Market} & \textbf{Linear-SP} & \textbf{RL-SP} & \textbf{Collusion}
& \textbf{Free Market} & \textbf{Linear-SP} & \textbf{RL-SP} & \textbf{Collusion} \\
\midrule
\textbf{Profit $\uparrow$} 
  & 0.697 & 0.642 & 0.707 $\pm$ 0.002 & \textit{1.0} 
  & 0.630 & 0.551 $\pm$ 0.003 & 0.622 $\pm$ 0.003 & \textit{1.0} \\
\textbf{Fairness $\uparrow$}
  & 0.821 & 0.895 & 0.895 & \textit{0.851}
  & 0.660 & 0.712 $\pm$ 0.006 & 0.767 $\pm$ 0.004 & \textit{0.709} \\
\textbf{Opt Out $\downarrow$}
  & 0.137 & 0.120 & 0.121 & \textit{0.335}
  & 0.173 & 0.1593 $\pm$ 0.002 & 0.218 $\pm$ 0.002 & \textit{0.395} \\
\midrule
\textbf{Welfare}
  & 0.572 & 0.575 & \textbf{0.633 $\pm$ 0.002} & \textit{0.851}
  & 0.416 & 0.392 $\pm$ 0.004 & \textbf{0.477 $\pm$ 0.003} & \textit{0.709} \\
\bottomrule
\end{tabular}
\caption{Comparison of Profit, Fairness, and Welfare across market scenarios and dynamics. We include market-wide opt-out rates to broadly outline market outcomes. Profit values are normalized with respect to the theoretical maximum determined by the collusive (oracle) market setting. We report standard errors (SE) over 5 seeds, and omit SE values under 0.001.}
\label{tab:comparison_markets}
\end{table*}

\subsection{Case Study 2: Consumer Credit}
\label{sec:credit}
\subsubsection{Market Overview and Motivation.}
A second key application of our framework pertains to consumer credit, where financial institutions offer loans or lines of credit to heterogeneous borrowers through competitive interest rates. Unlike health insurance, credit is generally less “essential” yet the stakes remain high for consumers with limited collateral or volatile income. For these groups, restricted credit access can hamper opportunities to secure home equity, reinforcing wealth gaps. As such, credit markets present a core tension between risk-based pricing (necessary for profitable lending) and equitable access (necessary for social welfare and fairness), making them an ideal testing ground for dynamic regulation.

\subsubsection{Policy Generation and Empirical Results.}
We evaluate the outcomes of our credit market simulation across the same four market regimes, though in this instance each assessed over five firms. In this market, we observe a qualitatively similar dynamic to the insurance scenario: the RL social planner consistently improves upon linear policy interventions and competitive baselines, even in a setting with greater firm heterogeneity and tighter fairness-profit tradeoffs. Evaluated over five competing lenders, Table~\ref{tab:comparison_markets} (\textit{right}) shows that the competitive market yields high profit levels (0.630) but suffers from low fairness (0.660). The Linear-SP baseline improves fairness to 0.712 but imposes a blunt restriction on risk-based interest rates, thereby driving down profits and yielding modest improvements in participation. By contrast, the RL policy exhibits adaptive behaviour, learning market-specific bracket assignments that improve fairness further (0.767) while also maintaining profitability approaching that of the Free Market baseline. However, we note an increase in the global opt-out rate under the RL policy, a result discussed further in the next section. Nonetheless, overall social welfare increases to 0.477, reversing the decline observed under the Linear-SP baseline (0.392). This represents social welfare gains of 15\% and 22\% over the competitive and Linear-SP baselines, respectively.
\\
\noindent
Altogether, these results confirm that an RL‐based regulator can foster an equilibrium where profitability and fairness minimally conflict.

\medskip
\section{Discussion}
A notable aspect of our experiments is how tacit cooperation emerges within certain policy frameworks, evidenced by how markets under the RL policy yield comparable or even higher profits than under unregulated Free Market. This is indicative of a social dilemma in unregulated Free Market: profit-maximizing individual firm behavior reduces aggregate profits. Meanwhile, by coordinating firm incentives, the regulator resolves this inefficiency, while simultaneously aligning firm and consumer welfare. Crucially, this “cooperation” among firms is driven not by explicit collusion, but instead by an understanding of specific market dynamics acquired by the social planner, allowing it to design incentives which mirror cooperation. From a policy perspective, these insights suggest that partial regulation in the form of adaptive bracket constraints that do not artificially cap or flatten prices can be more attractive than rigid price bounds or unregulated Free Market. This approach preserves profitability for firms while simultaneously broadening access and mitigating discriminatory pricing. In practice, such bracket tuning could be made transparent to both firms and policymakers, enabling oversight of how fairness categorizations evolve over time.

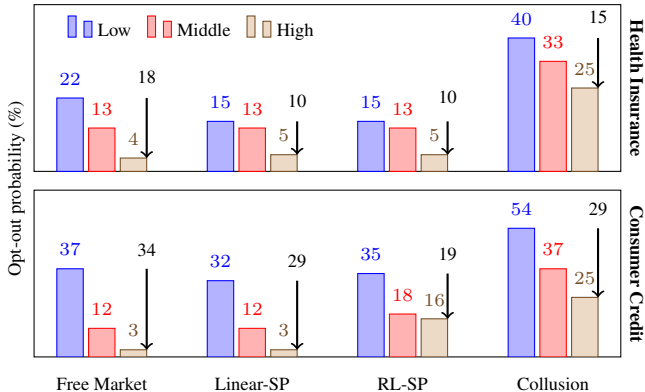
\begin{figure}[ht]
\centering
\begin{tikzpicture}
\begin{groupplot}[
    group style={
        group size=1 by 2,
        vertical sep=0.25cm
    },
    ybar,
    x=2.0cm,                 
    width=0.48\textwidth,
    height=3.8cm,
    enlarge x limits=0.15,    
    symbolic x coords={Free Market,Linear-SP,RL-SP,Collusion},
    xtick=data,
    xticklabel style={anchor=north},
    every node near coord/.append style={
        font=\scriptsize,
        anchor=south,
        yshift=1.5pt
    },
    ticklabel style={font=\scriptsize},
    yticklabels={},
    tick style={draw=none},
]

\nextgroupplot[
    ymin=0, ymax=50,
    nodes near coords,
    xticklabels={},
    legend style={
        font=\scriptsize,
        at={(0.5,0.97)}, anchor=north east,
        legend columns=4,
        /tikz/every even column/.append style={column sep=4pt},
        draw=none, fill=none
    },
]
\addplot coordinates {(Free Market,22) (Linear-SP,15) (RL-SP,15) (Collusion,40)};
\addplot coordinates {(Free Market,13) (Linear-SP,13) (RL-SP,13) (Collusion,33)};
\addplot coordinates {(Free Market, 4) (Linear-SP, 5) (RL-SP, 5) (Collusion,25)};
\begin{scope}[xshift=17pt]
\draw[->, thick] (axis cs:Free Market,22) -- (axis cs:Free Market,4);
\node[font=\scriptsize, anchor=south] at (axis cs:Free Market,23.5) {18};

\draw[->, thick] (axis cs:Linear-SP,15) -- (axis cs:Linear-SP,5);
\node[font=\scriptsize, anchor=south] at (axis cs:Linear-SP,16.5) {10};

\draw[->, thick] (axis cs:RL-SP,15) -- (axis cs:RL-SP,5);
\node[font=\scriptsize, anchor=south] at (axis cs:RL-SP,17.5) {10};

\draw[->, thick] (axis cs:Collusion,40) -- (axis cs:Collusion,25);
\node[font=\scriptsize, anchor=south] at (axis cs:Collusion,41.5) {15};
\end{scope}

\legend{Low, Middle, High, $\Delta_{\max}$}

\nextgroupplot[
    ymin=0, ymax=70,
    nodes near coords,
]
\addplot coordinates {(Free Market,37) (Linear-SP,32) (RL-SP,35) (Collusion,54)};
\addplot coordinates {(Free Market,12) (Linear-SP,12) (RL-SP,18) (Collusion,37)};
\addplot coordinates {(Free Market,3) (Linear-SP,3) (RL-SP,16) (Collusion,25)};
\begin{scope}[xshift=17pt]
\draw[->, thick] (axis cs:Free Market,37) -- (axis cs:Free Market,3);
\node[font=\scriptsize, anchor=south] at (axis cs:Free Market,38.5) {34};

\draw[->, thick] (axis cs:Linear-SP,32) -- (axis cs:Linear-SP,3);
\node[font=\scriptsize, anchor=south] at (axis cs:Linear-SP,33.5) {29};

\draw[->, thick] (axis cs:RL-SP,35) -- (axis cs:RL-SP,16);
\node[font=\scriptsize, anchor=south] at (axis cs:RL-SP,36.5) {19};

\draw[->, thick] (axis cs:Collusion,54) -- (axis cs:Collusion,25);
\node[font=\scriptsize, anchor=south] at (axis cs:Collusion,55.5) {29};
\end{scope}

\end{groupplot}

\node[
    rotate=90,
    anchor=south,
    font=\scriptsize,
    xshift=-0.1cm
] at ($(group c1r1.west)!0.5!(group c1r2.west)$)
  {Opt‑out probability (\%)};

\node[
    rotate=270,
    anchor=south,
    font=\scriptsize
] at ($(group c1r1.east)$) {\textbf{Health Insurance}};

\node[
    rotate=270,
    anchor=south,
    font=\scriptsize
] at ($(group c1r2.east)$) {\textbf{Consumer Credit}};

\end{tikzpicture}
\caption{Mean consumer‑group opt‑out rates under multiple market frameworks. The arrow indicates the maximum difference in opt-out rate means between population groups. By global fairness, lower is better.}
\label{fig:opt_out_barplot}
\end{figure}

\noindent Despite obtaining welfare improvements with our RL method, we deem it necessary to highlight a shortcoming regarding our fairness criterion. While demand fairness advocates for equal access to goods (as does \emph{demographic parity}, it is insufficient on its own at ensuring that consumer participation rates improve, as it simply requires that they be agnostic to group membership, with no weight accorded to their actual values. From Table~\ref{tab:comparison_markets}, we find that fairness under \textbf{collusion}, an illicit market practice in most jurisdictions, actually outperforms Free Market in both markets, despite yeilding the highest opt out rates by a substantial margin. This is not because it truly encourages participation among underrepresented groups, but because it simply makes all consumers more equally \emph{un}likely to participate. This is evident from Figure~\ref{fig:opt_out_barplot}, where we report group-level opt-out rates. From these, we observe that this share is higher for each subgroup under collusion. We note further that, under the RL policy, there are reductions in opt-out rates among the low-income group in both markets compared to Free Market, albeit at the cost of an opt-out rate increase among the high and middle-income groups. Thus, it is important to examine resulting market distributions in order to qualitatively assess the impact of policy generations. Broadly, while equal access is a desirable goal, it should not come in the form of increased exclusion. A fairness criterion that improves parity in access but leads to a larger number of individuals being priced out or discouraged from participation can ultimately exacerbate structural harms and decrease overall social welfare. To alleviate this concern, one might consider alternate metrics, such as consumer surplus fairness~\cite{cohen2021dynamic}, which, when using our probabilistic setting, can be defined via the log-sum rule~\cite{small1981applied} as
\[
\bigl|\text{max}_i\;CS(\boldsymbol{\tau})_{i} - \text{min}_k\;CS(\boldsymbol{\tau})_{k}\bigr|\;\leq\;\epsilon, \forall\;{i, k},
\]
where
\[
CS(\boldsymbol{\tau})_{i} = \frac{1}{\beta_i}\text{log}\Bigl(\sum_j^N\;\text{exp}(\overline{\alpha}_{j} - \beta_ip_{i,j})\Bigr).
\]
Accordingly, \texttt{MarketSim} makes it straightforward to swap any welfare metric desired by the user and evaluate outcomes in terms of firm performance and consumer distribution.

\subsection{Ablation Study: Market Bounds on Fairness}
\label{sec:ablation}
While fairness, according to our local and global definitions, has a theoretical upper bound of $1.0$, the empirical upper bound is largely dependent upon market dynamics and parameter initializations.
\begin{table}[ht]
\centering
\begin{tabular}{c c c}
\toprule
& Insurance & Credit \\
\midrule
Firm A & 0.947 & 0.823\\
Firm B & 0.947 & 0.885 \\
Firm C & - & 0.793 \\
Firm D & - & 0.902 \\
Firm E & - & 0.763 \\
Global & 0.895 & 0.791 \\
\bottomrule
\end{tabular}
\caption{Local and global fairness values under a fairness-maximizing SP tax policy.}
\label{fig:ablation_fairness_bound}
\end{table}
Thus, to uncover the empirical upper bounds on fairness in both markets, we train a new social planner with the sole objective of maximizing global demand fairness with no weight on profit. Results reveal that our instance of the market for insurance has an empirical upper bound on global fairness of $0.895$, which was achieved by both Linear-SP baselines and welfare-maximizing RL policies. Meanwhile, in the credit market, our fairness-maximizing SP reveals an empirical upper bound of $0.791$, which was approached but never achieved by our welfare-maximizing SP. This can influence the degree of improvement one can expect when experimenting with regulatory frameworks in \texttt{MarketSim}.

\section{Runtime \& Scalability of \texttt{MarketSim}}
\label{sec:runtime}
We explore runtimes to demonstrate the approximately-linear scalability of price Free Market in \texttt{MarketSim}, reporting wall-clock convergence times on markets of 2 to 100 firms. As the number of participating firms may vary between markets, our simulation should scale well over a broad range of firm counts.
\begin{figure}[ht]
\centering
\begin{tikzpicture}
\begin{axis}[
    width=0.9\linewidth,
    height=5.2cm,
    xlabel={Number of Firms},
    ylabel={Mean Runtime (s)},
    xmin=0, xmax=105,
    ymin=0, ymax=5,
    xtick={0,20,40,60,80,100},
    ytick={0,1,2,3,4,5},
    axis lines=left,
    enlarge x limits=0.05,
    enlarge y limits=0.05,
    grid=both,
    grid style={gray!15},
    major grid style={gray!30},
    tick style={gray!70},
    tick label style={font=\footnotesize},
    label style={font=\footnotesize},
    line width=1pt,
    mark size=.5pt,
    error bars/y dir=both,
    error bars/y explicit,
    every axis plot/.append style={thick},
    clip=false,
]

\addplot[
    color=blue!60!black,
    mark=*,
    mark options={fill=blue!40},
    error bars/.cd,
    y explicit,
]
coordinates {
(2, 0.096127) +- (0, 0.003554)
(20, 0.823836) +- (0, 0.130398)
(30, 1.158854) +- (0, 0.036261)
(40, 1.684572) +- (0, 0.190766)
(50, 2.118188) +- (0, 0.243765)
(60, 2.493393) +- (0, 0.274698)
(70, 2.908767) +- (0, 0.173760)
(80, 3.455010) +- (0, 0.213952)
(90, 3.964190) +- (0, 0.318150)
(100, 4.352400) +- (0, 0.297467)
};
\end{axis}
\end{tikzpicture}
\caption{Mean wall-clock runtime per 10 rounds as a function of the number of firms (on CPU). Error bars represent standard deviation over 5 seeds.}
\label{fig:runtime_errorbars}
\end{figure}
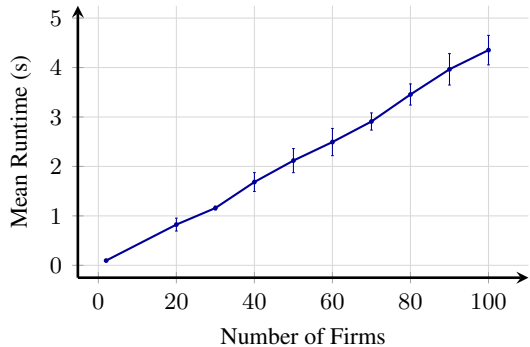
For instance, 80\% the U.S. health insurance market is dominated by 10 firms, with 57\% concentrated in the top 3~\cite{ama2023market}. Meanwhile, the credit market is more fragmented, with many non-bank lenders making up 65\% of the mortgage lending market~\cite{cfpb2024nonbank}. Conversely, ~70\% of the credit card market is dominated by 5 issuers~\cite{wallethub2025creditcards}, indicating a broader range for participation within lending markets.~\texttt{MarketSim} nonetheless accommodates arbitrarily many firms and consumers, with runtimes scaling linearly.

\subsubsection{Limitations and Future Work.}
While these findings offer promising evidence, we highlight several directions for further inquiry:

\begin{enumerate}
    \item \textbf{Multi-step Taxation Dynamics:} While this work focused on a single-step taxation policy, an exciting direction for future research is to frame taxation as a sequential decision-making task. Such settings could further emphasize the advantages of RL approaches.
    \item \textbf{Data Quality and Bias:} Real‐world transactional datasets often contain inaccuracies, missing fields, or historical biases, which could skew bracket assignments. Investigating robustness under such conditions remains an essential next step.
    \item \textbf{Bracket Design:} We implement a straightforward bracket structure here. Future work might explore more sophisticated, flexible brackets or alternative reward functions that accommodate multiple fairness definitions or risk preferences.
    \item \textbf{Legal and Ethical Context:} As bracketed interventions shape competitive behavior at scale, deeper analysis is warranted to confirm compatibility with antitrust laws and to guard against new forms of collusion or bias.
\end{enumerate}

\noindent In summary, our case studies demonstrate that dynamic bracket regulation can notably enhance fairness and welfare in a competitive market scenario feature dynamic pricing with heterogeneous firms and consumer profiles. By aligning private incentives with public interest goals, RL‐driven brackets exemplify a viable pathway toward more inclusive market systems for essential products and services.

\section{Conclusion}
In this work, we explore social welfare outcomes under various policy mechanisms in markets featuring risk-based dynamic pricing. We mathematically demonstrate the relationship between local and global notions of fairness, thereby motivating a firm-level policy approach. We then introduce \texttt{MarketSim}, a simulator for risk-based dynamic pricing, which we render open-source. Finally, we reproduce two distinct real world markets where global opt-out gaps have been empirically recorded (health insurance and consumer credit) and demonstrate how RL can be successfully leveraged to generate interpretable policy mechanisms aimed at improving social welfare.

\section{Ethical Statement}
If rigorously validated and overseen, AI-assisted regulation could help align private incentives with societal fairness goals in essential-goods markets (e.g.\ health insurance, consumer credit).  Conversely, careless application risks legitimising opaque pricing schemes and deepening existing disparities.  We urge future work to prioritise transparency, multidisciplinary oversight, and the voices of affected communities.

\section*{Acknowledgements}
Funding support for project activities has been partially provided by Canada CIFAR AI Chair, Google award, and FRQNT scholarships.
We also express our gratitude to Compute Canada and Mila clusters for their support in providing facilities for our evaluations.

\bibliography{bibliography}

\appendix

\section{Price Convergence in Insurance Simulation}
\label{appendix:price_convergence}
Firms converge to different price allocations dependent upon the market dynamics. Under fairness incentives (linear and RL policies), firms are nudged to adopt lower prices for the low income group than under Free Market. When firms collude, they set prices much higher to maximize total market profits.
\definecolor{myGreenL}{HTML}{C9E7B1} 
\definecolor{myGreenM}{HTML}{A3CC85} 

\definecolor{myRedL}{HTML}{F7C1BB}   
\definecolor{myRedM}{HTML}{EE8572}   

\definecolor{myBlueL}{HTML}{A7D7E8}  
\definecolor{myBlueM}{HTML}{63B2CF}  

\begin{tikzpicture}
\begin{axis}[
    width=0.48\textwidth, 
    height=4.5cm,
    ybar,
    bar width=4pt,
    enlarge x limits=0.15,
    ylabel={Premia},
    symbolic x coords={Free Market,Linear-SP,RL,Collusion},
    xtick=data,
    ymin=0,
    ymajorgrids=true,
    grid style=dashed,
    legend style={
      font=\footnotesize,
      legend columns=2,     
      transpose legend=true, 
      at={(0.5,-0.25)},     
      anchor=north,
      /tikz/every even column/.append style={column sep=2pt}
    },
    title={Insurance Premia Convergence},
]

\addplot[fill=myGreenL] coordinates {
  (Free Market,5.42) (Linear-SP,4.86) (RL,4.86) (Collusion,12.39)
};
\addplot[fill=myGreenM] coordinates {
  (Free Market,5.30) (Linear-SP,4.81) (RL,4.81) (Collusion,8.60)
};

\addplot[fill=myRedL] coordinates {
  (Free Market,5.46) (Linear-SP,5.47) (RL,5.47) (Collusion,8.60)
};
\addplot[fill=myRedM] coordinates {
  (Free Market,5.35) (Linear-SP,5.36) (RL,5.36) (Collusion,12.39)
};

\addplot[fill=myBlueL] coordinates {
  (Free Market,10.11) (Linear-SP,10.98) (RL,10.98) (Collusion,20)
};
\addplot[fill=myBlueM] coordinates {
  (Free Market,10.02) (Linear-SP,10.82) (RL,10.82) (Collusion,20)
};

\legend{
  $P^{A}_{low}$,
  $P^{B}_{low}$,
  $P^{A}_{middle}$,
  $P^{B}_{middle}$,
  $P^{A}_{high}$,
  $P^{B}_{high}$
}

\end{axis}
\end{tikzpicture}
We note that the RL policy leads firms to converge to prices similar to those resulting from the linear policy. This is likely due to these fairness brackets being optimal, as uncovered in Section~\ref{sec:ablation}. However, the SP is able to improve welfare further by lowering taxes in those brackets, allowing firms to keep more of their profit for achieving optimal fairness.

\section{A Probabilistic Bound on Global Fairness}
\label{appendix:proxy}
With more assumptions, we derive a tighter, albeit probabilistic bound on opt-out gaps as a result of bounds on local gaps.
\begin{proposition}[Local fairness \(\Rightarrow\) global fairness]
We wish to bound, with high probability, the deviation
  $$\bigl|\,\Pr(F=0 \mid A =i) - \Pr(F=0 \mid A=k)\bigr|$$
  $$\;=\;
  \left| \sum_{j=1}^{N} \Bigl(\Pr(F=j \mid A=k) - \Pr(F=j \mid A=i)\Bigr) \right|$$
  $$\;=\;
  \Bigl|\sum_{j=1}^{N} X_j\Bigr|,$$
where we set the random variables
\[
\begin{aligned}
  X_j \;:=\; \Pr(F=j \mid A=k)\;-\;\Pr(F=j \mid A=i),\\
  \qquad j = 1,\dots, N.
\end{aligned}
\]
\medskip
\noindent
\textbf{Assumptions.}
\begin{enumerate}
  \item The variables $\{X_j\}_{j=1}^{N}$ are independent.
  \item Each $X_j$ is bounded: $\lvert X_j\rvert \le \epsilon$.
  \item Optionally, $\operatorname{Var}[X_j] \le \sigma^2$ (needed only for Bernstein).
\end{enumerate}

\paragraph{a)\;Hoeffding’s Inequality (bounded differences)}
Because the $X_j$’s are bounded in $[-\epsilon,\epsilon]$, Hoeffding’s inequality gives, for a candidate bound \(t\),
\[
  \Pr\!\Bigl(\bigl|\textstyle\sum_{j=1}^{N} X_j\bigr| \,\ge t\Bigr)
  \;\le\;
  2 \exp\!\Bigl(-\tfrac{2 t^{2}}{N\,(2\epsilon)^{2}}\Bigr).
\]
Setting
\(
  t = 2\epsilon\,\sqrt{\tfrac{N}{2}\,\log\!\bigl(\frac{2}{\delta}\bigr)}
\)
yields, with probability at least $1-\delta$,
\begin{equation}\label{eq:hoeffding-final}
\begin{aligned}
  \Bigl|\sum_{j=1}^{N} \bigl(P(F=j \mid A=k) - P(F=j \mid A=i)\bigr)\Bigr|\\
  \;\le\;
  2\epsilon\,\sqrt{\frac{N}{2}\,
    \log\!\Bigl(\tfrac{2}{\delta}\Bigr)}.
\end{aligned}
\end{equation}

\paragraph{b)\;Bernstein’s Inequality (variance information available)}
If in addition $\operatorname{Var}[X_j] \le \sigma^{2}$, Bernstein’s inequality states
\[
  \Pr\!\Bigl(\bigl|\textstyle\sum_{j=1}^{N} X_j\bigr| \,\ge t\Bigr)
  \;\le\;
  2\exp\!\Bigl(
      -\frac{t^{2}}
            {2N\sigma^{2} + \tfrac{2}{3}\epsilon\,t}
  \Bigr),
\]
which can be tighter when most differences are very small (i.e.\ $\sigma^{2} \ll \epsilon^{2}$).

\paragraph{High–Probability Bound}
From~\eqref{eq:hoeffding-final}, we conclude that, with probability at least $1-\delta$,
\[
\begin{aligned}
  \bigl|\,\Pr(F=0 \mid A=i) - \Pr(F=0 \mid A= k)\bigr|
  \;\le\;\\
  2\epsilon\,\sqrt{\frac{N}{2}\,
    \log\!\Bigl(\tfrac{2}{\delta}\Bigr)}.
\end{aligned}
\]
\end{proposition}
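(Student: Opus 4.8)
The plan is to prove the high-probability bound by first isolating the quantity of interest as a sum of independent bounded random variables, then invoking Hoeffding's inequality directly. Writing $p_{j\mid i} := \Pr(F=j\mid A=i)$, I would begin with the algebraic identity $p_{0\mid i} - p_{0\mid k} = \sum_{j=1}^{N}(p_{j\mid k} - p_{j\mid i})$, which follows exactly as in the proof of Proposition~\ref{thm:local_to_global} from the fact that opt-out probability is the complement of total in-market mass. Defining $X_j := p_{j\mid k} - p_{j\mid i}$, the target deviation becomes $|\sum_{j=1}^{N} X_j|$, and the local-fairness hypothesis supplies $|X_j|\le\epsilon$, so each $X_j$ lies in $[-\epsilon,\epsilon]$ — an interval of width $2\epsilon$.

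Next I would apply the two-sided Hoeffding bound for sums of independent variables supported on intervals of length $2\epsilon$: for any threshold $t>0$,
\[
  \Pr\!\Bigl(\bigl|\textstyle\sum_{j=1}^{N}X_j\bigr|\ge t\Bigr)
  \;\le\;
  2\exp\!\Bigl(-\tfrac{2t^{2}}{N(2\epsilon)^{2}}\Bigr).
\]
Then I would invert this: set the right-hand side equal to $\delta$, solve for $t$, obtaining $t = 2\epsilon\sqrt{\tfrac{N}{2}\log(2/\delta)}$, and conclude that with probability at least $1-\delta$ the deviation $|\sum_j X_j|$ — equivalently $|p_{0\mid i}-p_{0\mid k}|$ — is at most this value. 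The optional Bernstein refinement (part b) follows the same template but substitutes the variance-sensitive concentration inequality $\Pr(|\sum_j X_j|\ge t)\le 2\exp(-t^{2}/(2N\sigma^{2}+\tfrac{2}{3}\epsilon t))$, which I would simply state and note is sharper when $\sigma^{2}\ll\epsilon^{2}$, without inverting it explicitly.

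The main obstacle here is not the concentration machinery — which is textbook — but justifying the independence assumption on the $\{X_j\}$, since in the actual market model the conditional choice probabilities $p_{j\mid i}$ are coupled through the shared softmax normalizer $\sum_{j=0}^{n}\exp(U_{i,j})$, so the differences $X_j$ are \emph{not} genuinely independent. I would therefore present this proposition honestly as a conditional result: \emph{if} one is willing to treat the per-firm fairness gaps as independent (a modelling idealization, reasonable when firms' price-setting is decentralized and the population is large), then the $O(\epsilon\sqrt{N\log(1/\delta)})$ bound replaces the deterministic $O(N\epsilon)$ bound of Proposition~\ref{thm:local_to_global}. The remaining steps — the complement identity, the boundedness extraction, and the algebraic inversion of the exponential tail — are all routine and require no delicate estimates.
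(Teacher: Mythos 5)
Your proposal matches the paper's own derivation essentially step for step: the complement identity reducing the opt-out gap to $\bigl|\sum_{j}X_j\bigr|$, the boundedness $|X_j|\le\epsilon$ from local fairness, the two-sided Hoeffding bound with interval width $2\epsilon$ inverted at $t=2\epsilon\sqrt{\tfrac{N}{2}\log(2/\delta)}$, and the Bernstein variant stated without inversion. Your added caveat that the $X_j$ are coupled through the shared softmax normalizer, so independence is a modelling idealization rather than a consequence of the model, is a fair and honest observation that the paper itself leaves implicit by simply listing independence among its assumptions.
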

\noindent This Hoeffding bound, while not a convergence guarantee, is sufficient to demonstrate the tight relationship between local and global fairness.

\section{SAC Hyperparameters}
\label{appendix:sac_hypers}
We outline the hyperparameters used with SAC to train our Social Planner agent. They are the same as the default hyperparameters encouraged by~\citet{softactorcritic}.
\rowcolors{2}{white}{gray!05}
\begin{tabular}{l c}
\multicolumn{2}{c}{\cellcolor{gray!20}\textbf{\texttt{SAC} Hyperparameters}}\\
\toprule
& \textbf{Hyperparameters}\\
\midrule
Hidden-layer sizes (actor/critic) & $[256,256]$\\
Activation function & \texttt{ReLU}\\
Learning rate $\eta$ & $0.0003$\\
Batch size & $256$\\
Replay-buffer size & $1M$\\
Discount factor $\gamma$ & $0.00$ (RL)\\
Target-update coef.\ $\tau_{\text{soft}}$ & $0.005$\\
Entropy coef.\ $\alpha_{\text{ent}}$ & $auto$\\
Updates per env step & $1$\\
Warm-up steps & $100$\\
Total training steps & $2M$\\
Random seed & $[0,1,2,3,4]$\\
\bottomrule
\end{tabular}
\label{tab:sac_hypers}

\end{document}